\def\final{0}  
\def\iflong{\iffalse}
\newcommand{\cnote}[1]{{\color{red}[{\tiny Chao: \bf #1}]\marginpar{\color{red}*}}}
\newcommand{\todonote}[1]{{\color{red}[{\tiny TODO: \bf #1}]\marginpar{\color{red}*}}}
\newcommand{\cnote}[1]{}
\newcommand{\todonote}[1]{}
\newtheorem{theorem}{Theorem}[section]
\newtheorem{definition}{Definition}[section]
\newtheorem{lemma}[theorem]{Lemma}
\newtheorem{proposition}[theorem]{Proposition}
\newcommand\floor[1]{\left\lfloor#1\right\rfloor}
\tikzset{
  treenode/.style = {shape=rectangle, rounded corners,
                     draw, align=center,
                     top color=white, bottom color=blue!20},
  internal/.style     = {treenode, font=\Large, bottom color=red!30},
  leaf/.style      = {treenode, font=\Large, shape=circle, scale=.5,}
}
\begin{document}

\title{Linear TreeShap}

\author{%
Peng Yu$^{1,2,4}$ \quad Chao Xu$^{1}$\thanks{Corresponding author} \quad  Albert Bifet$^{2}$ \quad  Jesse Read$^{3}$
\\$^{1}$ University of Electronic Science and Technology of China\\  \quad $^{2}$ Télécom ParisTech \quad $^{3}$ Ecole polytechnique\quad $^{4}$ Shopify\\
\texttt{\{peng.yu,albert.bifet\}@telecom-paris.fr}\\
\texttt{cxu@uestc.edu.cn}\\
\texttt{jesse.read@polytechnique.edu}
}
\maketitle
\begin{abstract}
Decision trees are well-known due to their ease of interpretability.
To improve accuracy, we need to grow deep trees or ensembles of trees.
These are hard to interpret, offsetting their original benefits. 
Shapley values have recently become a popular way to explain the predictions of tree-based machine learning models. 
It provides a linear weighting to features independent of the tree structure. 
The rise in popularity is mainly due to TreeShap, which solves a general exponential complexity problem in polynomial time. 
Following extensive adoption in the industry, more efficient algorithms are required. 
This paper presents a more efficient and straightforward algorithm: Linear TreeShap.
Like TreeShap, Linear TreeShap is exact and requires the same amount of memory.  
\end{abstract}

\section{Introduction}
Machine learning in the industry has played more and more critical roles.
For both business and fairness purposes, the need for explainability has been increasing dramatically.
As one of the most popular machine learning models, the tree-based model attracted much attention.
Several methods were developed to improve the interpretability of complex tree models, such as sampling-based local explanation model LIME\cite{ribeiro2016should}, game-theoretical based Shapley value\cite{shapley1953value}, etc.
Shapley value gained particular interest due to both local and globally consistent and efficient implementation: TreeShap\cite{lundberg2020local}.
With the broad adoption of Shapley value, the industry has been seeking a much more efficient implementation.
Various methods like GPUTreeShap\cite{mitchell2020gputreeshap} and FastTreeShap\cite{yang2021fast} were proposed to speed up TreeShap.
GPUTreeShap primarily focuses on utilizing GPU to perform efficient parallelization.
And FastTreeShap improves the efficiency of TreeShap by utilizing caching.
All of them are empirical approaches lacking a mathematical foundation and are thus making them hard to understand.

We solve the exact Shapley value computing problem based on polynomial arithmetic.
By utilizing the properties of polynomials, our proposed algorithm Linear TreeShap can compute the exact Shapley value in linear time.
And there is no compromise in memory utilization.

\subsection{Contrast with previous result}
We compare the running time of our algorithm with previous results for a single tree in Table ~\ref{table:results}, since all current algorithms for ensemble of trees are applying the same algorithm to each tree individually.

Let $S$ be the number of samples to be explained, $N$ the number of features, $L$ the number of leaves in the tree, and $D$ is the maximum depth of the tree. For simplicity, we assume every feature is used in the tree, and therefore $N=O(L)$. Also, $D\leq L$.

\begin{figure}[h]
\centering
\begin{minipage}{\textwidth}
\begin{minipage}[b]{0.5\textwidth}
       \centering

\begin{tabular}{ c c c }
\toprule
 Algorithm & Time Complexity & Space Complexity \\ 
 \midrule
 \shortstack{  Original\\ TreeSHAP \cite{lundberg2020local}} & $O(SLD^2)$ & $O(D^2 + N)$ \\  
  \midrule
 \shortstack{ Fast \\ TreeSHAP v1 \cite{yang2021fast}} & $O(SLD^2)$ &  $O(D^2 + N)$    \\
  \midrule
 \shortstack{Fast \\ TreeSHAP v2 \cite{yang2021fast}} & $O(L2^DD + SLD)$ & $O(L2^D)$    \\
  \midrule
\shortstack{ Linear \\ TreeSHAP}  & $O(SLD)$ & $O(D^2 + N)$ \\
 \bottomrule
\end{tabular}
\captionof{table}{Comparison of both computational and space complexity}
\label{table:results}
\end{minipage}\hfill
\begin{minipage}[b]{0.3\textwidth}
\centering
\begin{tikzpicture}
  [
    sibling distance        = 6em,
    level distance          = 5em,
    edge from parent/.style = {draw, -latex},
    every node/.style       = {font=\footnotesize},
    every edge/.style = {font=\small},
    sloped
  ]
  \node [internal] {temperature}
    child { node [leaf] {0.5 rain(D)}
      edge from parent node [below] {$\leq$ 19} node[above, align=center] {$w$:0.5}}
    child { node [internal] {cloudy}
      child { node [internal] {wind speed}
        child { node [leaf] {0.4 rain (C)}
          edge from parent node [above] {$w$: 0.7} node [below] {$\leq 8$} }
        child { node [leaf] {0.6 rain (B)}
          edge from parent node [above] {$w$: 0.3}
                           node [below] {$> 8$} }
        edge from parent node [above] {$w$:0.4} node [below] {no} }
      child { node [leaf] {0.7 rain (A)}
              edge from parent node [above, align=center]
                {$w$:0.6} node[below]{yes}
              }
              edge from parent node [above] {$w$:0.5} node[below] {$> 19$} };
\end{tikzpicture}
    \captionof{figure}{An example decision tree $T_f$ shows chances of rain \label{decisiontree}}
    \end{minipage}
\end{minipage} 
\end{figure}

\section{Methodology}

\subsection{Notation \& Background}
Elementary symmetry polynomials are widely used in our paper. $\R[x]$ denotes the set of polynomials with coefficient in $\R$. $\R[x]_d$ are polynomials of degree no larger than $d$. We use $\odot$ for polynomial multiplication. For two polynomials $a$ and $b$, $\floor{\frac{a}{b}}$ is the quotient of the polynomial division $a/b$. 

For two vectors $x,y\in \R^d$, $\langle x, y \rangle$ is the inner product of $x$ and $y$. We abuse the notation, so when a polynomial appears in the inner product, we take it to mean the coefficient vector of the polynomial. Namely, if $p,q$ are both polynomials of the same degree, then $\langle p, q\rangle$ is the inner product of their coefficient vectors. We use $\cdot$ for matrix multiplication. 

We refer to $x \in X \subset \R^m$ as an instance and $f: X \to \R$ as the fitted tree model in a supervised learning task. Here, $m$ denotes the number of all features, $M$ is the set of all features, and $|.|$ is the cardinality operation, namely $|M| = m$. We denote $x[i]$ as the value of feature $i$ of instance $x$.

We have to start with some common terminologies because our algorithm is closely involved with trees. 
A rooted tree $T=(V, E)$ is a directed tree where each edge is oriented away from the root $r\in V$. For each node $v$, $P_v$ is the root to $v$ path, i.e. the set of edges from the root to $v$. $L(v)$ is the set of leaves reachable from $v$. $L(T)=L(r)$ is the set of leaves of $T$. $T$ is a full binary tree if every non-leaf node has two children.
If an edge $e$ goes from $u$ to $v$, then $u$ and $v$ are the tail and head of $e$, respectively. We write $h(e)$ for the head of $e$.

A tree is weighted, if there is an edge weight $w_e$ for each edge $e$. It is a labeled tree if each edge also has a label $\ell_e$. For a labeled tree $T$, let $E_i$ be all the edges of the tree with label $i$. Similarly, define $P_{i,v} = P_v \cap E_i$, the set of edges in the root to $v$ path that has label $i$. The last edge of any subset of a path is the edge furthest away from the root. 

For our purpose, a decision tree is a weighted labeled rooted full binary tree.
There is a corresponding decision tree for the fitted tree model $T_f$. 
The internal nodes of the tree are called the decision nodes, and the leaves are called the end nodes.
Every decision node has a label of feature $i$, and every end node contains a prediction $v$. The label of each edge is the feature of the head node of the edge. We will call the label on the edge of the feature. 
Every edge $e$ contains weight $w_e$ that is the conditional probability based on associated splitting criteria during training. 

When predicting a given instance, $x$, decision tree model $f$ sends the instance to one of its leaves according to splitting criteria. We draw an example decision tree in Fig~\ref{decisiontree}. 
Each leaf node is labeled with id and prediction value. 
Every decision node is labeled with the feature. We also associate each edge with conditional probability $w$ and splitting criteria of features in the parenting node.

To represent the marginal effect, we use $f_S(x): X \to \R, S \subseteq M$ to denote the prediction of instance $x$ of the fitted tree model using only the features in  \emph{active set} $S$, and treat the rest of features of instance $x$ as missing. Using this representation, the default prediction $f(x)$ is a shorthand for $f_M(x)$.

The \emph{Shapley value} of a decision tree model $f$ is the function $\phi(f,i): X \to  \R$, 
\begin{equation}
\begin{split}
\label{share_eq:1}
  \phi(f, i)(x) = \frac{1}{|M|} \sum_{S \subseteq M \setminus \{i\}} \frac{1}{\binom{|M|-1}{|S|}} f_{S \cup \{i \}}(x) - f_{S}(x).
\end{split}
\end{equation}

The Shapley value $\phi(f, i)(x)$ quantifies the marginal contribution of feature $i$ in the tree model $f$ when predicting instance $x$. The problem of computing the Shapley values is the following algorithmic problem.

\noindent\fbox{%
    \parbox{\textwidth}{%
        \textbf{The Tree Shapley Value Problem}\\
        \textbf{Input:} A decision tree $T_f$ for function $f:X\to \R$ over $m$ features, and $x\in X$.\\
        \textbf{Output:} The vector $(\phi(f,1)(x),\ldots,\phi(f,m)(x))$.
    }%
}

Meanwhile, decision nodes cannot split instances with missing feature values. 
A common convention is to use conditional expectation. 
When a decision node encounters a missing value, it redirects the instance to both children and returns the weighted sum of both children's predictions. 
The weights differ between decision nodes and are empirical instance proportions during training: $w_l, w_r$. 
Here $w_l + w_r = 1$ and $0 < w_l < 1$.
A similar approach is also used in both Treeshap\cite{lundberg2020local}, and C4.5 \cite{salzberg1994c4} to deal with missing values. 
Any instance would result in a single leaf when there is no missing feature. 
In contrast, an instance might reach multiple different leaves with a missing feature.

Here, we use an example instance $x=$\textbf{(temperature: 20, cloudy: no, wind speed: 6)} with tree $f$ in Fig \ref{decisiontree} to show the full process of Shapley value computing.
By following the decision nodes of $T_f$, the prediction $f(x)$ is leaf $C$: 0.4 chance of raining. Now we compute the importance/Shapley value of feature \textbf{(temperature: 18)} among $x$ for getting prediction of 0.4 chance of raining. 

The importance/Shapley value of feature \textbf{(temperature: 18)} is 
\begin{equation*}
\begin{split}
     & \phi(f, \textbf{temperature})(x) = \frac{1}{3}(\frac{1}{\binom{2}{2}}( f_{\{\textbf{temperature, cloudy, wind speed} \}}(x) - f_{\{ \textbf{cloudy, wind speed} \}}(x))  \\
        & + \frac{1}{\binom{2}{1}}( f_{\{\textbf{temperature, wind speed}\}}(x) - f_{\{\textbf{wind speed}\}}(x) + f_{\{\textbf{temperature, cloudy}\}}(x) - f_{\{\textbf{cloudy}\}}(x))  \\
        & +\frac{1}{\binom{2}{0}}(f_{\{\textbf{temperature}\}}(x) - f_{\emptyset}(x)))
\end{split}
\end{equation*}

To elaborate more, a term $f_{\{ \textbf{cloudy, wind speed} \}}(x)$ with $x=$\textbf{(temperature: 20, cloudy: no, wind speed: 6)} is equivalent to $f($\textbf{cloudy: no, wind speed: 6}$)$. When traversal first decision node: temperature, value to current feature is considered as unspecified. We sum over children leaves with empirical weights and get $0.5\cdot D + 0.5\cdot C$ as prediction.

On the other hand, decision tree $f$ can be linearized into decision rules \cite{quinlan1987simplifying}. A decision rule can be seen as a decision tree with only a single path. 
A decision rule $R^v: X \to \R$ for a leaf $v$ can be constructed via starting from root node, following all the conditions along the path to the leaf $v$. We use $F(R)$ to represent the set of all features specified in decision rule $R$, namely, $F(R) = \{i | P_{i,v} \neq \emptyset \}$. 

The linearization of the decision tree $f$ to decision rules is the relation $f(x) = \sum_{v\in L(T_f)} R^v(x)$. Example tree in Fig~\ref{decisiontree} can be linearized into 4 rules: 
\begin{enumerate}
    \item $R^A$: \textbf{if} (temperature $>$ 19) and (is cloudy) \textbf{then} predict 0.7 chance of rain \textbf{else} predict 0 chance of rain
    \item $R^B$: \textbf{if} (temperature $>$ 19) and (is not cloudy) and (wind speed $>$ 8) \textbf{then} predict 0.6 chance of rain \textbf{else} predict 0 chance of rain
    \item $R^C$: \textbf{if} (temperature $>$ 19) and (is not cloudy) and (wind speed $\leq$ 8) \textbf{then} predict 0.4 chance of rain \textbf{else} predict 0 chance of rain
    \item $R^D$: \textbf{if} (temperature $\leq$ 19) \textbf{then} predict 0.5 chance of rain \textbf{else} predict 0 chance of rain
\end{enumerate}

For a decision rule $R$, we also introduce \emph{prediction with active set} $S$, $R_S: X \to \R$. When features are missing, leaf value further weighted by their conditional probability is provided as the prediction. Here we introduce the definition recursively. First, we define the prediction of rule $R$ associated with leaf value $\mathcal{V}$ with empty input:
\begin{equation}
\label{branch_pred}
    \begin{split}
        R_{\emptyset}^v = R_{\emptyset}^v(x) = \mathcal{V} \prod_{e \in P_v} w_e 
    \end{split}
\end{equation}

Where $w_e$ is the conditional probability/proportion of instances,  when splitting by decision node at the source of edge $e$, the proportion of instances belong to the current edge. $\mathcal{V}$ is the prediction of the leaf node $v$ that defines that decision rule.

We say $x \in \pi_i(R)$,  if $x[i]$ is satisfied by every splitting criteria concerning feature $i$ in decision rule $R$.
For a given instance $x$ and leaf $v$, we use a new variable $q_{i,v}(x)$ to denote the marginal prediction of $R^v$ when adding feature $i$ to active set $S$.

\begin{equation}
\label{branch_pred:2}
    \begin{split}
    q_{i,v}(x) := \begin{cases}
       \prod_{e \in P_{i,v} } \frac{1}{w_{e}}   & x \ \in \pi_i(R^v)\\
        0 & x \ \notin \ \pi_i(R^v)
    \end{cases}
    \end{split}
\end{equation}

The empty product equals $1$, hence if $P_{i,v}=\emptyset$, $q_{i,v}(x)=1$.
We omit the super/subscript $v$ if there is no ambiguity on the leaf node. 
So, with $i \notin S$, we can write: 
\begin{equation}    
    R_{ \{ i\} \cup S}(x) = q_i(x) R_S(x)  
\end{equation} 
Since $\emptyset$ is a subset of any set $S$, we can get $R_S(x)$ via products of weights: 
\begin{equation}
        R_S(x) = R_{\emptyset}  \prod_{j \in S} q_j(x)
\end{equation} 

With $R_S$, $f_S$ can also be linearized into the sum of rule predictions:
\begin{equation}
    f_S(x) = \sum_{v \in L(T_f)} R_S^v(x).
\end{equation}



\subsection{Some special functions and their properties}
\begin{definition}
Define the reciprocal binomial polynomial to be $B_d(x) = \sum_{i=0}^d {\binom{d}{i}}^{-1} x^i$.
\end{definition}

\begin{definition}
The function $\psi_d : \R[x]_d \to \mathbb{R} $ is defined as 
\begin{equation}
\label{eq:psi}
\psi_d(A) := \frac{\langle A, B_d \rangle}{d+1}.
\end{equation}

We write $\psi(p)=\psi_d(p)$ where $d$ is the degree of $p$.
\end{definition}

The function $\psi_d$ has two nice properties: additive for same degree polynomial and "scale" invariant when multiplying binomial coefficient.

\begin{proposition}\label{phiproperty}
Let $p,q\in \R[x]_d$, and $k\in \N$.

\begin{itemize}
\item Additivity: $\psi_d(p) + \psi_d(q) = \psi_d(p+q)$.
\item Scale Invariant: $\psi(p\odot (1+y)^k) = \psi(p)$.
\end{itemize}
\end{proposition}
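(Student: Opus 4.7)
My plan is to dispatch additivity by bilinearity, and then reduce scale invariance to a single identity of inverse binomial coefficients that drops out of the Beta integral.

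Additivity is immediate. Since $\psi_d(A)=\langle A,B_d\rangle/(d+1)$ is a linear functional of $A$ (the inner product is bilinear and $B_d$ is a fixed polynomial), we get $\psi_d(p+q)=\psi_d(p)+\psi_d(q)$ in one line.

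For scale invariance, I would first use additivity to reduce to the monomial case $p=x^i$ with $0\le i\le d$. Writing $x^i\odot(1+x)^k=\sum_{j=0}^k\binom{k}{j}x^{i+j}$ and comparing coefficients with $B_{d+k}$, the task reduces to proving
\[
  \sum_{j=0}^k \binom{k}{j}\binom{d+k}{i+j}^{-1} \;=\; \frac{d+k+1}{d+1}\binom{d}{i}^{-1}.
\]
My approach is to substitute the Beta-function identity $\binom{n}{i}^{-1}=(n+1)\int_0^1 t^i(1-t)^{n-i}\,dt$ into the left-hand side. The sum over $j$ then slides inside the integral and collapses via the binomial theorem to $(t+(1-t))^k=1$; what remains is $(d+k+1)\int_0^1 t^i(1-t)^{d-i}\,dt=\frac{d+k+1}{d+1}\binom{d}{i}^{-1}$, matching the right-hand side.

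Conceptually this argument is saying that $\psi_d$ has the integral representation $\psi_d(p)=\int_0^1 \tilde p(t,1-t)\,dt$, where $\tilde p(x,y)=y^d p(x/y)$ is the degree-$d$ homogenization; multiplying $p$ by $(1+x)^k$ multiplies its degree-$(d+k)$ homogenization by $(x+y)^k$, which is identically $1$ on the segment $x+y=1$, so the integral is unchanged. The only thing that needs real care is the degree bookkeeping: $\psi$ without subscript infers its degree from its argument, so the two sides of the claimed equality are $\psi_{d+k}$ and $\psi_d$, and the compensating ratio $(d+k+1)/(d+1)$ above is precisely what makes the monomial identity fall out cleanly. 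I do not expect any obstacle beyond this bookkeeping.
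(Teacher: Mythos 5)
Your proof is correct, and its skeleton coincides with the paper's: additivity follows from linearity of the inner product in one line, and scale invariance is reduced, coefficient by coefficient, to exactly the identity $\sum_{j=0}^{k}\binom{k}{j}\binom{d+k}{i+j}^{-1}=\frac{d+k+1}{d+1}\binom{d}{i}^{-1}$, which is the paper's Lemma on the constant sum (stated there with both sides multiplied by $\binom{d}{i}$). Where you genuinely diverge is in how that identity is established. The paper rewrites $\binom{d}{i}\binom{k}{j}\big/\binom{d+k}{i+j}$ as $\binom{d+k-i-j}{d-i}\binom{i+j}{i}\big/\binom{d+k}{k}$ and invokes a Vandermonde-type convolution $\sum_j\binom{d+k-i-j}{d-i}\binom{i+j}{i}=\frac{d+k+1}{d+1}\binom{d+k}{k}$; your route substitutes the Beta-integral representation $\binom{n}{i}^{-1}=(n+1)\int_0^1 t^i(1-t)^{n-i}\,dt$, after which the $j$-sum collapses by the binomial theorem to $\bigl(t+(1-t)\bigr)^k=1$ and the remaining integral is $\frac{1}{(d+1)}\binom{d}{i}^{-1}$. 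Both are complete; yours buys a conceptual bonus, namely the representation $\psi_d(p)=\int_0^1 (1-t)^d\, p\bigl(t/(1-t)\bigr)\,dt$ under which scale invariance is visibly the statement that $(x+y)^k\equiv 1$ on the segment $x+y=1$, while the paper's argument stays purely combinatorial and self-contained with standard binomial identities. One small point to keep explicit in a write-up: when you "reduce to the monomial $x^i$," you are really comparing the weight attached to the coefficient $a_i$ in $\psi_d(p)$ versus in $\psi_{d+k}(p\odot(1+y)^k)$ for the \emph{fixed} ambient degrees $d$ and $d+k$ (not applying $\psi$ to the literal degree-$i$ monomial); you flag this bookkeeping yourself, and the stated identity already carries the correct normalization ratio $(d+k+1)/(d+1)$, so nothing is missing.
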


\subsection{Summary polynomials and their relation to Shapley value}
Consider we have a function $f$ represented by a decision tree $T_f$. We want to explain a particular sample $x$, therefore in the later sections, we abuse the notation and let $g$ to mean $g(x)$ whenever $g:X\to \R$, e.g $q_{i,v} = q_{i,v}(x)$. In order to not confuse the readers, the polynomials we are constructing always have the formal variable $y$. 

Since tree prediction can be linearized into decision rules, and the Shapley value also has Linearity property, we decompose the Shapley value of a tree as the sum of the Shapley value of decision rules.
\begin{equation}
\begin{split}
\phi(f, i) = \sum_{v \in L(T_f)} \phi(R^v,i)
\end{split}
\end{equation}

Now, for each decision rule, we define a summary polynomial.

\begin{definition}
For a decision tree $T_f$ and an instance $x$. For a decision rule associated with leaf $v$ in $T_f$, the \emph{summary polynomial} $G_v$ is defined as 
      \begin{equation}
                  \label{eq:poly}
            G_v(y) = R^v_{\emptyset} \prod_{j \in F(R^v)} (q_{j,v} + y)
        \end{equation}
\end{definition}

Next, we study the relationship between the summary polynomial and the Shapley value of corresponding decision rule.

\begin{lemma}
Let $v$ be a leaf in $T_f$, then
\[\phi(R^v,i) = (q_{i,v}-1)\psi \left(\frac{G_v}{q_{i,v}+y}\right).\]
\end{lemma}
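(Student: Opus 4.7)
The plan is to rewrite the Shapley value formula on the right-hand side into something that manifestly extracts coefficients of the summary polynomial, and then to use the scale invariance of $\psi$ to justify extending $G_v$ from the features appearing in the rule to all $m$ features in $M$. The key identity driving everything is that for any $S\subseteq M\setminus\{i\}$, $R^v_{S\cup\{i\}} - R^v_S = (q_{i,v}-1)R^v_S$, which already pulls the factor $q_{i,v}-1$ outside the sum; what remains is to recognize the weighted average $\sum_S \binom{m-1}{|S|}^{-1}R^v_S/m$ as $\psi$ applied to a suitable polynomial.

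I will first dispose of the case $i\notin F(R^v)$. Then $q_{i,v}=1$ by the empty-product convention, so the right-hand side vanishes; on the left, $R^v_{S\cup\{i\}}=q_{i,v}R^v_S=R^v_S$ for every $S$, so $\phi(R^v,i)=0$ as well. This also sidesteps the question of whether $G_v/(q_{i,v}+y)$ is actually a polynomial in this case, since the prefactor kills the expression.

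For the main case $i\in F=F(R^v)$ with $d=|F|$ and $m=|M|$, I set
\[
H(y) := \frac{G_v(y)}{q_{i,v}+y} = R^v_\emptyset\prod_{j\in F\setminus\{i\}}(q_{j,v}+y),
\]
a polynomial of degree $d-1$. Using $q_{j,v}=1$ whenever $j\notin F$, I multiply by $(1+y)^{m-d}$ to get
\[
\widetilde H(y) = H(y)\odot(1+y)^{m-d} = R^v_\emptyset\prod_{j\in M\setminus\{i\}}(q_{j,v}+y),
\]
a polynomial of degree $m-1$. Expanding this product indexed by subsets $S\subseteq M\setminus\{i\}$, the coefficient of $y^{m-1-|S|}$ is precisely $R^v_\emptyset\prod_{j\in S}q_{j,v}=R^v_S$. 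Plugging this into the definition of $\psi_{m-1}$ and reindexing $l=m-1-|S|$ yields
\[
\psi_{m-1}(\widetilde H) = \frac{1}{m}\sum_{S\subseteq M\setminus\{i\}}\binom{m-1}{|S|}^{-1}R^v_S.
\]
Multiplying by $(q_{i,v}-1)$ and using $R^v_{S\cup\{i\}}-R^v_S=(q_{i,v}-1)R^v_S$ turns this into exactly the Shapley value formula for $\phi(R^v,i)$. Finally, the scale-invariance clause of Proposition \ref{phiproperty} gives $\psi_{m-1}(\widetilde H)=\psi_{d-1}(H)=\psi(H)$, completing the identification.

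The main obstacle I expect is the bookkeeping around degrees: $G_v/(q_{i,v}+y)$ has degree $d-1$ but the Shapley formula naturally lives over all $m$ features, so one must be careful that $\psi$ means $\psi_{d-1}$ on the left while I am secretly computing $\psi_{m-1}$ on the padded polynomial. Scale invariance is precisely the bridge, and verifying that the padding factor $(1+y)^{m-d}$ is consistent with the convention $q_{j,v}=1$ for $j\notin F$ is the one step that deserves care; the rest is a straightforward coefficient extraction.
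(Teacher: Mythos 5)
Your proof is correct, and it reaches the paper's identity by essentially the same mechanism: expand the product defining the summary polynomial so that the coefficient of $y^k$ collects the elementary symmetric sums $\sum_{|S|=k}\prod_{j\in S}q_{j,v}$, then pair with the reciprocal binomial polynomial to recover the Shapley weights. The one organizational difference is where the reduction from all $m$ features to the $d=|F(R^v)|$ features actually used by the rule happens. The paper performs this reduction directly on the combinatorial sum, passing from $\frac{1}{m}\sum_{S\subseteq M\setminus\{i\}}\binom{m-1}{|S|}^{-1}\prod_{j\in S}q_j$ to $\frac{1}{d}\sum_{k=0}^{d-1}\binom{d-1}{k}^{-1}\sum_{|S|=k}\prod_{j\in S}q_j$ without comment; you instead pad $G_v/(q_{i,v}+y)$ by $(1+y)^{m-d}$, identify the full $m$-feature sum with $\psi_{m-1}$ of the padded polynomial, and invoke the scale-invariance clause of Proposition \ref{phiproperty} to drop back to $\psi_{d-1}$. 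These are the same combinatorial fact in two guises, but your routing makes explicit a step the paper leaves implicit, which is a small improvement in rigor. One cosmetic point: the coefficient of $y^{m-1-k}$ in $\widetilde H$ is the sum of $R^v_S$ over all $S$ of size $k$, not the single term $R^v_S$; your displayed formula for $\psi_{m-1}(\widetilde H)$ is nonetheless correct, so this is only a matter of phrasing.
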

\begin{proof}
Since everything involved in the proof is related to the leaf $v$, we drop $v$ from the super/subscripts for simplicity.
First, we simplify the Shapley value of rule $R$ into:
\begin{equation}
\begin{split}
\phi(R,i)  = \frac{1}{m} \sum_{S \subset M \setminus \{i\}} \frac{1}{\binom{m-1}{|S|}} R_{S \cup\{i\} } - R_S  
         =  \frac{R_{\emptyset} (q_i-1)}{m}   \sum_{S \subset M \setminus \{i\}} \frac{1}{\binom{m-1}{|S|}}  \prod_{j \in S} q_j 
\end{split}
\end{equation}

When feature $i$ does not appear in $R$, $q_i-1$ returns 0, thus Shapley value on feature $i$ from rule $R$ is 0. 
Let $|F(R)|=d$, the number of features in $R$. The Shapley value of $R$ further reduces to:
\begin{equation}
\begin{split}
\phi(R,i) 
= \frac{R_{\emptyset}(q_i-1)}{d} \sum_{k=0}^{d-1}  \frac{1}{\binom{d-1}{k}} \sum_{S \subset F(R) \setminus \{i\}}^{|S| = k}  \prod_{j \in S} q_j
\end{split}
\end{equation}

We observe that $R_{\emptyset} \sum_{S \subset F(R) \setminus \{i\}}^{|S| = k} \prod_{j \in S} q_j$ is precisely the coefficient of $y^k$ in $\frac{G}{q_i+y}$. 

We obtain the weighted sum of all subsets' decision rule prediction using the inner product: 
\begin{equation}
    R_{\emptyset} \sum_{S \subset F(R) \setminus \{i\}} {1}/{\binom{d-1}{|S|}}  \sum_{S \subset F(R) \setminus \{i\}}^{|S| = k} \prod_{j \in S} q_j  = \langle \frac{G}{q_i+y}, B_{d-1} \rangle
\end{equation}

Shapley value for $R$ has a compact form as shown in Eq.\ref{eq:branch-Shapley}.
\begin{equation}
\label{eq:branch-Shapley}
\begin{aligned}
\phi(R,i)  &=  \frac{R_{\emptyset} (q_i-1)}{d}  \sum_{S \subset F(R) \setminus \{i\}} \frac{1}{\binom{d-1}{|S|} }  \prod_{j \in S} q_j\\  
	   &=  \frac{(q_i-1)}{d}  \langle \frac{G}{q_i+y}, B_{d-1} \rangle\\
       &= (q_i-1)\psi\left(\frac{G}{q_i + y}\right)
\end{aligned}
\end{equation}
\end{proof}

\subsection{Computations}
Even though we have simplified the Shapley value of a decision rule in a compact form using polynomials, it is still not friendly in computational complexity. In particular, the values $q_{i,v}$ are flat aggregated statistics and do not necessarily share terms in-between different rules. This makes it difficult to share intermediate results across different rules. To benefit from the fact that decision rules overlap, we develop an edge-based polynomial representation. 

For every edge $e$ with feature $i$, we use $e^{\uparrow}$ to denote its closest ancestor edge that shares the same feature. In cases such edge does not exist, $e^{\uparrow}=\bot$. We also use $x \in \pi_u$ to represent the $x[i]$ is satisfied by all splitting criteria on feature $i$ associated with all edges in $P_{i,u}$. 

\begin{equation}
\label{branch_pred:1}
    \begin{split}
    p_{e} := \begin{cases}
        \prod_{e' \in P_{i,h(e)}} \frac{1}{w_{e'}}   & x \ \in \pi_{h(e)}\\
        0 & x \ \notin \ \pi_{h(e)}
    \end{cases} 
    \end{split}
\end{equation} 

We also define additionally that $p_{\bot}=1$.

If $e$ is the last edge in $P_{i,v}$ then $p_e = q_{i,v}$. This is the key to avoid $q_{i,v}$ completely, and instead switch to $p_e$. Our analysis will make sure that any $p_e$ that does not correspond to $q_{i,v}$ for any $v$ and $i$ gets cancelled out.


We show a relation between the Shapley value of a decision rule and the newly defined $p_e$'s.

Consider an operation $\oplus_{d_1,d_2} : \R[x]_{d_1} \times \R[x]_{d_2} \to \R[x]_{\max(d_1, d_2)}$. The subscript is omitted when $d_1,d_2$ is implicit.
\begin{equation}
G^1 \oplus G^2 := G^1 + G^2 \odot (1+y)^{d_1-d_2},
\end{equation}

We extend the summary polynomial to all nodes in the tree. Let $G_u=\bigoplus_{v \in L(u)} G^v$. Denote $d_e$ as the degree of $G_u$, where $h(e)=u$.

\begin{proposition}\label{cancel_scaled}
Let $v$ be a leaf in $T_f$, and $d_v$ be the degree of $G_v$ then
\begin{equation*}
\phi(R^v, i) = \sum_{e \in P_{i,v}} (p_e - 1) \psi\left(\floor{\frac{G_v\odot (y+1)^{d_e-d_v} }{y+p_e}}\right) - (p_{e^\uparrow} - 1) \psi\left(\floor{\frac{G_v \odot (y+1)^{d_{e^{\uparrow}}-d_v}  }{y+p_{e^\uparrow}}}\right).
\end{equation*}
\end{proposition}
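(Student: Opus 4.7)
My plan is to show that the sum on the right telescopes and collapses to the formula from the previous lemma, $\phi(R^v,i) = (q_{i,v}-1)\psi(G_v/(q_{i,v}+y))$.

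First, I would enumerate the edges of $P_{i,v}$ from root to leaf as $e_1,e_2,\ldots,e_k$. By definition of $e^{\uparrow}$ (nearest ancestor edge with the same label), consecutive edges along this list satisfy $e_j^{\uparrow}=e_{j-1}$ for $j\ge 2$, and $e_1^{\uparrow}=\bot$. Split each summand in the statement as $A_j - B_j$, where $A_j$ uses the data $(p_{e_j},d_{e_j})$ and $B_j$ uses $(p_{e_j^{\uparrow}},d_{e_j^{\uparrow}})$. For $j\ge 2$ we have $B_j = A_{j-1}$ term-by-term (same $p$, same $d$, same $G_v$, same operations), so the sum telescopes. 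The residual lower term $B_1$ carries the factor $(p_{\bot}-1)=0$, hence vanishes, and we are left with $A_k$ alone.

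Next, I would simplify $A_k$. Because $e_k$ is the last edge of $P_{i,v}$, the definition of $p_e$ gives $p_{e_k}=q_{i,v}$. Moreover, since $i\in F(R^v)$ (the path contains at least one $i$-edge), the factor $(q_{i,v}+y)$ appears in $G_v$ by the very definition of the summary polynomial, so $(y+p_{e_k})$ divides $G_v\odot(y+1)^{d_{e_k}-d_v}$ exactly. The polynomial floor is therefore a true quotient, and
\[
\floor{\frac{G_v\odot(y+1)^{d_{e_k}-d_v}}{y+p_{e_k}}}
  \;=\; \frac{G_v}{q_{i,v}+y}\,\odot\,(y+1)^{d_{e_k}-d_v}.
\]
At this point I would invoke the scale-invariance half of Proposition \ref{phiproperty}, which kills the $(1+y)^{d_{e_k}-d_v}$ factor inside $\psi$, yielding $\psi(G_v/(q_{i,v}+y))$. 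Multiplying by $(p_{e_k}-1)=(q_{i,v}-1)$ reproduces the expression of the previous lemma, completing the proof.

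The main obstacle is bookkeeping around the floor operator: for intermediate edges $e_j$ with $j<k$, the divisor $(y+p_{e_j})$ need not divide the numerator, so $A_j$ and $B_j$ are in general not equal to an honest rational function but only to a polynomial quotient. The telescoping argument sidesteps this entirely because $A_{j-1}$ and $B_j$ are literally the same symbolic expression (same floor, same polynomial, same $\psi$), so cancellation does not require them to simplify. One also has to check that $d_{e_k}-d_v\ge 0$ so that scale-invariance applies; this holds because $h(e_k)$ is an ancestor (or equal to) $v$, and $G_{h(e_k)} = \bigoplus_{v'\in L(h(e_k))} G^{v'}$ has degree $\max_{v'\in L(h(e_k))} d_{v'}\ge d_v$ by construction of $\oplus$. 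With these two observations in hand, the proof reduces to the clean telescoping/scale-invariance argument above.
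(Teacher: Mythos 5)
Your proof is correct and follows essentially the same route as the paper's: identify the sum as telescoping (with the $p_{\bot}=1$ boundary term vanishing), reduce to the last-edge term where $p_{e^*}=q_{i,v}$ and $(y+q_{i,v})$ divides $G_v$ exactly, and then remove the $(1+y)$ factor via scale invariance to recover the previous lemma. Your version is in fact more careful than the paper's, which simply asserts the telescoping and omits the checks that $e_j^{\uparrow}=e_{j-1}$ and that $d_{e^*}-d_v\ge 0$.
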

\begin{proof}
Let $e^*$ be the last edge of $P_{i,v}$. We note a few facts. $p_{e^*} = q_{i,v}$, $y+q_{i,v}$ divides $G_v$, and the sum is a telescoping sum. Put them together.
\[
\begin{aligned}
    &\sum_{e \in P_{i,v}} (p_e - 1) \psi\left(\floor{\frac{G_v\odot (y+1)^{d_e-d_v} }{y+p_e}}\right) - (p_{e^\uparrow} - 1) \psi\left(\floor{\frac{G_v \odot (y+1)^{d_{e^{\uparrow}}-d_v}  }{y+p_{e^\uparrow}}}\right)\\
   =& (p_{e^*} - 1) \psi\left(\floor{\frac{G_v \odot (y+1)^{d_{e^*}-d_v} }{y+p_{e^*}}}\right)\\
   =& (q_{i,v}-1)\psi\left(\floor{\frac{G_v \odot (y+1)^{d_{e^*}-d_v} }{y+q_{i,v}}}\right)\\
   =& (q_{i,v}-1)\psi\left(\frac{G_v }{y+q_{i,v}} \odot (y+1)^{d_{e^*}-d_v}\right) \\
   =& (q_{i,v}-1)\psi\left(\frac{G_v }{y+q_{i,v}}\right)\\
   =&\phi(R^v, i)
\end{aligned}
\]
\end{proof}

The following theorem establishes the relation between Shapley values, the summary polynomials at each node, and $p_e$ for each edge $e$.

\begin{theorem}[Main]\label{thm:mainstructural}
\[
\phi(f,i) = \sum_{e\in E_i} (p_e - 1)\psi\left( \floor{\frac{G_{h(e)}}{y+p_e}} \right) - (p_{e^{\uparrow}} - 1)\psi\left(\floor{\frac{G_{h(e)} \odot (y+1)^{d_{e^{\uparrow}}-d_e}}{y+p_{e^{\uparrow}}}}\right)
\]
\end{theorem}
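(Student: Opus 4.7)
The plan is to reduce the theorem to Proposition \ref{cancel_scaled} by three moves: swap the order of summation, pull constants out of $\psi$ using additivity, and recognize the defining sum of $G_{h(e)}$ inside a polynomial quotient. I start from the leaf-wise decomposition $\phi(f,i) = \sum_{v \in L(T_f)} \phi(R^v, i)$ and substitute the expression from Proposition \ref{cancel_scaled}. The resulting double sum ranges over pairs $(v,e)$ with $v$ a leaf and $e \in P_{i,v}$; the condition $e \in P_{i,v}$ is equivalent to the conjunction $e \in E_i$ and $v \in L(h(e))$ (i.e.\ $v$ is a descendant of the head of $e$), so the same sum may be rewritten as $\sum_{e \in E_i} \sum_{v \in L(h(e))}$.

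With $e$ now outermost, the coefficients $(p_e - 1)$ and $(p_{e^{\uparrow}}-1)$ are independent of the inner summation index and factor out of the $v$-sum. I then apply the additivity part of Proposition \ref{phiproperty} to move the $v$-sum inside each $\psi$. This is valid because, for fixed $e$, every summand $\floor{G_v \odot (y+1)^{d_e - d_v}/(y + p_e)}$ lies in $\R[x]_{d_e - 1}$, and likewise every second-family summand lies in $\R[x]_{d_{e^{\uparrow}} - 1}$. It then remains to identify
\[
\sum_{v \in L(h(e))} \floor{\frac{G_v \odot (y+1)^{d_e - d_v}}{y + p_e}} = \floor{\frac{G_{h(e)}}{y + p_e}},
\]
and, for the second family, to rewrite $(y+1)^{d_{e^{\uparrow}}-d_v} = (y+1)^{d_{e^{\uparrow}}-d_e} \odot (y+1)^{d_e - d_v}$, pull the $v$-independent factor outside the sum, and read off $\floor{G_{h(e)} \odot (y+1)^{d_{e^{\uparrow}}-d_e}/(y + p_{e^{\uparrow}})}$.

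Both identities come from one linear-algebra observation: polynomial quotient by the degree-one divisor $y + p$ is linear in the dividend, because each remainder is a scalar and the sum of scalars never overflows the degree bound of the divisor. Combined with the very definition $G_{h(e)} = \sum_{v \in L(h(e))} G_v \odot (y+1)^{d_e - d_v}$ supplied by the $\bigoplus$ operator, the identification is immediate, and substituting back yields the claimed formula. The main obstacle, such as it is, is purely bookkeeping: keeping the three degree labels $d_v$, $d_e$, $d_{e^{\uparrow}}$ straight when rewriting the exponents, and handling the boundary case $e^{\uparrow} = \bot$, where $p_\bot - 1 = 0$ makes the second summand vanish so that no division by $y + p_\bot$ need be interpreted. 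Beyond this, the proof is essentially a disciplined reorganization of Proposition \ref{cancel_scaled} obtained by summing over leaves first.
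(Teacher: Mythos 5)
Your proposal is correct and follows essentially the same route as the paper's proof: decompose $\phi(f,i)$ over leaves, substitute Proposition \ref{cancel_scaled}, reindex the double sum over $(e,v)$ pairs via the equivalence $e\in P_{i,v} \Leftrightarrow (e\in E_i \text{ and } v\in L(h(e)))$, apply the additivity of $\psi$, and recognize $\bigoplus_{v\in L(h(e))} G_v = G_{h(e)}$. Your added justification that the quotient by a degree-one divisor is linear in the dividend, and your explicit handling of the $e^{\uparrow}=\bot$ boundary case, are details the paper glosses over but do not change the argument.
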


\begin{proof}
Based on linearity of Shapley Value, $\phi(f,i) = \sum_{v \in L(T_f)} \phi (R^v, i)$. 

For each rule $R^v$, we can scale their summary polynomial $G_v$ to the degree of $G_{h(e)}$. Based on Proposition \ref{cancel_scaled}, 

\[ 
\phi(f,i) = \sum_{v \in L(T_f)}  \sum_{e \in P_{i,v}} (p_e - 1) \psi\left(\floor{\frac{G_v\odot (y+1)^{d_e-d_v} }{y+p_e}}\right) - (p_{e^\uparrow} - 1) \psi\left(\floor{\frac{G_v \odot (y+1)^{(d_e - d_v)+ (d_{e^{\uparrow}}-d_e)}  }{y+p_{e^\uparrow}}}\right)
\]

Observe that for any $(e,v)$ pair, we have $e\in E_i$ and $v\in L(h(e))$ if and only if $v\in L(T_f)$ and $e\in P_{i,v}$. Hence, can order the summation by summing through the edges. 
  \[ 
 \phi(f,i) = \sum_{e\in E_i}  \sum_{v \in L(h(e))} (p_e - 1) \psi\left(\floor{\frac{G_v\odot (y+1)^{d_e-d_v} }{y+p_e}}\right) - (p_{e^\uparrow} - 1) \psi\left(\floor{\frac{G_v \odot (y+1)^{(d_e - d_v)+ (d_{e^{\uparrow}}-d_e)}  }{y+p_{e^\uparrow}}}\right)
 \]
Observe that at each edge $e$, all summary polynomial $G$ is scaled to the same degree $d_e$. According to Proposition \ref{phiproperty}, we can add the summary polynomials before evaluate using $\psi(.)$. 
Now, focus on the first part of the sum.
\[
\begin{aligned}
\sum_{e\in E_i}  \sum_{v \in L(h(e))} ((p_e - 1) \psi(\lfloor  \frac{  G_v \odot (y+1)^{d_e-d_v} }{y+p_e} \rfloor)
&= \sum_{e\in E_i} (p_e - 1)\psi\left(\sum_{v \in L(h(e))} \floor{ \frac{G_v \odot (y+1)^{d_e-d_v} }{y+p_e} } \right)\\ 
&=\sum_{e\in E_i} (p_e - 1)\psi(\lfloor  \frac{ \sum_{v \in L(h(e))} G_v \odot (y+1)^{d_e-d_v} }{y+p_e} \rfloor)\\
&=\sum_{e\in E_i} (p_e - 1)\psi(\lfloor  \frac{ \bigoplus_{v \in L(h(e))} G_v }{y+p_e} \rfloor)\\
&=\sum_{e\in E_i} (p_e - 1)\psi\left( \floor{\frac{G_{h(e)}}{y+p_e}} \right)
\end{aligned}
\]

Using the exact same proof, we can also obtain 
\[
\sum_{e\in E_i}  \sum_{v \in L(h(e))}(p_{e^\uparrow} - 1) \psi\left(\floor{\frac{G_v \odot (y+1)^{(d_e - d_v)+ (d_{e^{\uparrow}}-d_e)}  }{y+p_{e^\uparrow}}}\right) = \sum_{e\in E_i}  (p_{e^{\uparrow}} - 1)\psi\left(\floor{\frac{G_{h(e)} \odot (y+1)^{d_{e^{\uparrow}}-d_e}}{y+p_{e^{\uparrow}}}}\right)
\]

\end{proof}

\subsection{Linear TreeSHAP and complexity analysis}

    \begin{figure}[h]
    \renewcommand{\figurename}{Algorithm}
    \centering
    \begin{algorithm}
      \textsc{ComputeSummaryPolynomials}$(x, v, C)$:\+
      \\  if node $v$ is leaf:\+
            \\ $G[v] \gets C\cdot R^v_\emptyset$\-
      \\ else: \+
            \\if $v$ is not the root: \+
            \\ $e \gets $ edge with $v$ as head 
              \\ $C \gets C \odot (y+p_e(x))$ 
              \\if $e^{\uparrow} \neq \bot$: \+
                 \\ $C \gets \frac{C}{y+p_{e^{\uparrow}}(x)}$ \-\-
          \\ $l, r\gets $ children of $v$
          \\  \textsc{ComputeSummaryPolynomials}$(x, l, C)$
          \\  \textsc{ComputeSummaryPolynomials}$(x, r, C)$
          \\ $G[v] \gets G[l] \oplus G[r]$ \-
      \\  return $G[v]$
    \end{algorithm}
    \caption{Obtain the summary polynomial for each node.}
    \label{alg:summary}
    \end{figure}
    
    \begin{figure}[h]
    \renewcommand{\figurename}{Algorithm}
    \centering
    \begin{algorithm}
      \textsc{AggregateShapley}$(x, v, G)$:\+
        \\ if $v$ has children:\+
        \\ $l,r \gets$ children of $v$
        \\ \textsc{AggregateShapley}$(x, l, G)$
        \\ \textsc{AggregateShapley}$(x, r, G)$\-
        \\if $v$ is not the root: \+
        \\ $e $ edge with $v$ as head
        \\ $i \gets$ feature of edge $e$
        \\ $S[i] \gets S[i] + (p_e(x) -1) \psi(\floor{\frac{G[v]}{y+p_e(x)}})$ 
            \\ $S[i] \gets S[i] - (p_{e^{\uparrow}}(x) -1) \psi(\floor{\frac{G[v]\odot (y+1)^{d_{e^{\uparrow}} - d_e}}{y+p_{e^{\uparrow}}(x)}})$ \- 
    \end{algorithm}
    \caption{Obtain the Shapley value vector.}
    \label{alg:shapley}
    \end{figure}
    
    \begin{figure}[h]
    \renewcommand{\figurename}{Algorithm}
    \centering
    \begin{algorithm}
      \textsc{LinearTreeSHAP}$(x, T_f)$:\+
      \\ $G\gets$ an array indexed by the nodes
      \\ \textsc{ComputeSummaryPolynomials}$(x, root(T_f), 1)$
      \\ $S\gets$ an array indexed by the features
      \\ \textsc{AggregateShapley}$(x, root(T_f), G)$
      \\ return $S$
    \end{algorithm}
    \caption{The entire \textsc{LinearTreeSHAP} algorithm.}
    \label{alg:main}
    \end{figure}

By Theorem \ref{thm:mainstructural}, we can obtain an algorithm in two phases. Efficiently compute the summary polynomial on each node(Algorithm~\ref{alg:summary}) and then evaluates for $\phi(f, i)$ directly(Algorithm~\ref{alg:shapley}). Both parts of the algorithm are straightforward, basically computing directly through definition and tree traversal. The final values of $S[i]$ is the desired value $\phi(f,i)(x)$ after running Algorithm \ref{alg:main}.

To analyze the running time, one can see each node is visited a constant number of times. The operations  are polynomial addition, multiplication, division, inner product, or constant-time operations. In general, all those polynomial operations takes $O(D\log D)$ time for degree $D$ polynomial \cite{bracewell1986fourier}. This shows the total running time is $O(LD\log D)$. 

However, we never need the coefficients of the polynomials. So we can improve the running time by storing the summary polynomials in a better-suited form, the multipoint interpolation form. Namely, we evaluate the polynomials $G$ on a set of predefined unique points $Y = (y_0, y_1, y_2, \cdots, y_D) \in \mathbb{R}^{D+1}$, and store $G(Y) = (G(y_0),\ldots,G(y_D))$ instead. In this form, addition, product and division takes $O(D)$ time \cite{cohen1993course}. The evaluation function $\psi(G)$ also takes $O(D)$ time but needs more explanation. 

Denote $\mathcal{V}(Y) \subset \R^{D+1 \times D+1 }$ as the Vandermonde matrix of $Y$, where $v_{i,j} \in \mathcal{V}(Y) = y_i^j $ is the $j$th power of $y_i$. 

\begin{lemma}
Let $p,q\in R[x]_d$, and its coefficients $A$ and $B$, respectively, then we have 
\[\langle p, q\rangle = \langle A, B \rangle = \langle p(Y), \mathcal{V}(Y)^{-1} B \rangle.\]
\end{lemma}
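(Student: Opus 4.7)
The plan is almost pure linear algebra; the lemma is essentially the change of basis between the coefficient and evaluation representations of a polynomial of degree at most $d$.

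For the first equality, nothing needs to be done: the inner product of polynomials was \emph{defined} earlier in the paper to be the inner product of the coefficient vectors, so $\langle p,q\rangle=\langle A,B\rangle$ holds by convention.

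For the second equality, the key step is to relate the evaluation vector $p(Y)=(p(y_0),\ldots,p(y_d))$ to the coefficient vector $A$. Writing $p(x)=\sum_{j=0}^d A_j x^j$ and applying the definition $v_{i,j}=y_i^j$, one sees that
\[
p(y_i)=\sum_{j=0}^d A_j y_i^j=\sum_{j=0}^d v_{i,j}A_j,
\]
which is exactly the $i$-th component of $\mathcal{V}(Y)\cdot A$. Hence $p(Y)=\mathcal{V}(Y)\cdot A$, and since the points $y_0,\ldots,y_d$ are chosen distinct (so the Vandermonde determinant is nonzero), $\mathcal{V}(Y)$ is invertible and $A=\mathcal{V}(Y)^{-1}\cdot p(Y)$.

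The last step is a one-line substitution: plug this expression for $A$ into $\langle A,B\rangle$, and use the standard adjoint identity $\langle Mx,y\rangle=\langle x,M^\top y\rangle$ for the Euclidean inner product to move $\mathcal{V}(Y)^{-1}$ to the right factor, obtaining $\langle p(Y),\mathcal{V}(Y)^{-1}B\rangle$ in the form stated by the lemma.

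The main obstacle is really only bookkeeping: one must verify that the predefined evaluation points are distinct so that $\mathcal{V}(Y)^{-1}$ exists, and be careful about the transpose convention when moving the inverse Vandermonde across the inner product (the paper's statement implicitly identifies $\mathcal{V}(Y)^{-1}$ with its transpose when viewed as a linear operator, consistent with its coordinate-free use of $\langle\cdot,\cdot\rangle$). Both are routine; the content of the lemma is simply that evaluation at $Y$ and taking coefficients are related by the Vandermonde matrix, so the inner product $\langle A,B\rangle$ can be computed in the evaluation representation at the cost of a single application of $\mathcal{V}(Y)^{-1}$ to $B$.
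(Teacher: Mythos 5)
Your proof is correct and follows essentially the same route as the paper's: write $p(Y)$ as the Vandermonde matrix applied to the coefficient vector $A$, invert, and move the matrix across the inner product via the adjoint identity. You are in fact slightly more careful than the paper, which also glosses over the transpose convention and the need for distinct evaluation points.
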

\begin{proof}
Polynomial evaluation can be consider as inner product of coefficient and Vandermonde matrix of input $Y$. Namely $p(Y) = A \cdot \mathcal{V}(Y)$. Therefore $\langle p(Y), \mathcal{V}(Y)^{-1}B \rangle = \langle A \cdot \mathcal{V}(Y), \mathcal{V}(Y)^{-1}B \rangle =  \langle A ,  \mathcal{V}(Y)\cdot \mathcal{V}(Y)^{-1}B \rangle =  \langle A , B \rangle$ completes the proof. 
\end{proof}

In order to compute the inner products $\langle G, B_d\rangle$ in $O(D)$ time, we have to precompute $N_d = \mathcal{V}(Y)^{-1}C_d$, where $C_d$ is the coefficient of $B_d$, for all $0\leq d\leq D$. This can be done simply in $O(d^2)$ time for each $d$, so a total of $O(D^3)$ time. 
 
By storing the polynomial in interpolation form, all our polynomial operations on each node take $O(D)$ time. Therefore the total running time is $O(LD)$.

Other than the summary polynomials, the algorithm uses constant space to store information on nodes and edges. Each summary polynomial takes $O(D)$ space to store. Therefore the algorithm takes $O(LD)$ space. Nevertheless, we can save space by realizing the algorithms only need a single top-down and a single bottom-up step. By joining two steps into one, the algorithm consumes the summary polynomials on the spot. Hence the total space usage will be bounded by $O(D)$ times the stack size, bounded by $D$, the depth of the tree. The final total space usage is improved to $O(D^2)$.

\paragraph{Remark}
Even though $Y$ can be arbitrarily chosen based on the maximum depth of the tree $D$, it is shown that Chebyshev points are near-optimal in numerical stability \cite{weisbrich2021fast}. In our Linear TreeShap implementation, we used the Chebyshev points of the second kind.

\section{Experiments}
\begin{figure}[h]
    \centering
    \includegraphics[scale=0.45]{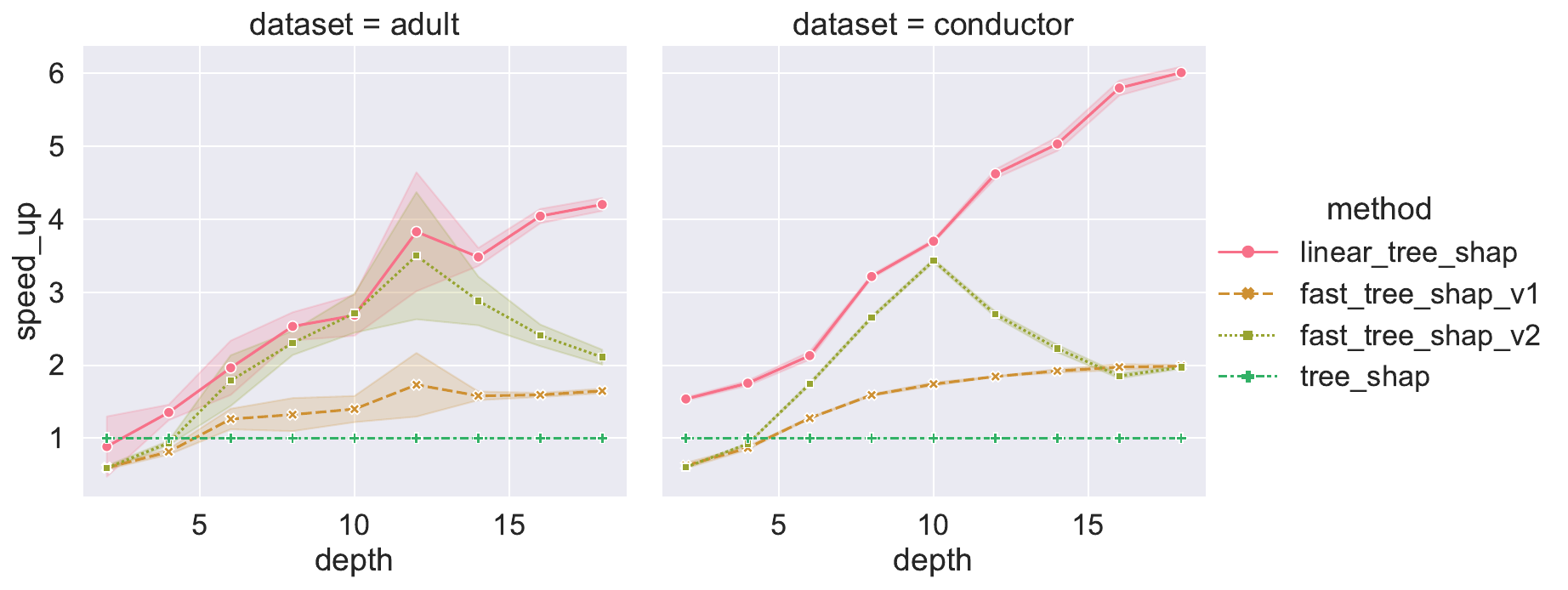}
    \caption{Speed up comparison}
    \label{fig:exp}
\end{figure}

\begin{table}[h!]
    \centering
    \begin{tabular}{ccccc}
\toprule
 \textbf{Datasets} & \textbf{\# Instances} & \textbf{\# Attributes} & \textbf{Task} & \textbf{Classes}  \\
 adult \cite{kohavi1996scaling}    &  48,842  &  64  &  Classification  &  2  \\
   conductor \cite{hamidieh2018data} &   21,263  &   81 &   Regression  &   -    \\
\bottomrule
\end{tabular}
    \caption{Datasets}
    \label{table:datasets}
\end{table}

We run an experiment on both regression dataset adult and classification dataset conductor(summary in  Table~\ref{table:datasets}) to compare both our method and two popular algorithms, TreeShap and Fast TreeShap. 
We explain Trees with depths ranging from 2 to 18. And to align the performance across different depths of trees, we plot the ratio between the time of Tree Shap and the time of all methods in Figure~\ref{fig:exp}. We run every algorithm on the same test set 5 times to get both average speeds up and the error bar. And for fair comparison purposes, all methods are limited to using a single core.

Linear TreeShap is the fastest among all setups. And due to heavy memory usage, Fast TreeShap V2 falls back to V1 when tree depth reaches 18 for the dataset conductor. Since the degree of the polynomial is bounded both by the depth of the tree also the number of unique features per decision rule, with deeper depth, dataset Conductor has much more speed up gains thanks to a higher number of features. We can conclude that the Linear TreeShap is more efficient than all state-of-the-art Shapley value computing methods in both theory and practice. 

\bibliographystyle{plain}
\bibliography{reference}
\newpage
\section*{Checklist}

\begin{enumerate}

\item For all authors...
\begin{enumerate}
  \item Do the main claims made in the abstract and introduction accurately reflect the paper's contributions and scope?
    \answerYes{}
  \item Did you describe the limitations of your work?
    \answerYes{}
  \item Did you discuss any potential negative societal impacts of your work?
    \answerNA{}
  \item Have you read the ethics review guidelines and ensured that your paper conforms to them?
    \answerYes{}
\end{enumerate}

\item If you are including theoretical results...
\begin{enumerate}
  \item Did you state the full set of assumptions of all theoretical results?
    \answerYes{}
	\item Did you include complete proofs of all theoretical results?
    \answerYes{}
\end{enumerate}

\item If you ran experiments...
\begin{enumerate}
  \item Did you include the code, data, and instructions needed to reproduce the main experimental results (either in the supplemental material or as a URL)?
    \answerYes{}
  \item Did you specify all the training details (e.g., data splits, hyperparameters, how they were chosen)?
    \answerYes{}
	\item Did you report error bars (e.g., with respect to the random seed after running experiments multiple times)?
    \answerNA{}
	\item Did you include the total amount of compute and the type of resources used (e.g., type of GPUs, internal cluster, or cloud provider)?
    \answerYes{}
\end{enumerate}

\item If you are using existing assets (e.g., code, data, models) or curating/releasing new assets...
\begin{enumerate}
  \item If your work uses existing assets, did you cite the creators?
    \answerYes{}
  \item Did you mention the license of the assets?
    \answerYes{}
  \item Did you include any new assets either in the supplemental material or as a URL?
    \answerYes{}
  \item Did you discuss whether and how consent was obtained from people whose data you're using/curating?
    \answerNA{}
  \item Did you discuss whether the data you are using/curating contains personally identifiable information or offensive content?
    \answerNA{}
\end{enumerate}

\item If you used crowdsourcing or conducted research with human subjects...
\begin{enumerate}
  \item Did you include the full text of instructions given to participants and screenshots, if applicable?
    \answerNA{}
  \item Did you describe any potential participant risks, with links to Institutional Review Board (IRB) approvals, if applicable?
    \answerNA{}
  \item Did you include the estimated hourly wage paid to participants and the total amount spent on participant compensation?
    \answerNA{}
\end{enumerate}

\end{enumerate}

\end{document}

